\documentclass[letterpaper,10pt,conference]{ieeeconf}

\IEEEoverridecommandlockouts
\usepackage{amsmath}
\usepackage{amssymb}
\usepackage{algorithm}
\usepackage{algpseudocode}
\usepackage{array}
\usepackage{booktabs}
\usepackage{cite}
\usepackage{graphicx}
\usepackage{microtype}
\usepackage{multirow}
\usepackage{placeins}
\usepackage{url}

\newcommand{\R}{\mathbb{R}}
\newcommand{\cN}{\mathcal{N}}
\newcommand{\cU}{\mathcal{U}}
\newcommand{\ubar}{\bar{u}}
\newtheorem{theorem}{Theorem}
\newtheorem{lemma}{Lemma}
\newtheorem{proposition}{Proposition}
\newtheorem{corollary}{Corollary}
\newtheorem{assumption}{Assumption}

\title{\LARGE \bf Safety-Filtered Distributed Koopman-MPC}

\author{Shengjun~Zhang$^{1,*}$, Wenhao~Li$^{2,3,*}$,
Zhenxin~Lin$^{4}$, and Zhenglong~Sun$^{2,3,\dagger}$\\[-1pt]
{\footnotesize $^{1}$School of Artificial Intelligence, Hubei University,
Wuhan, Hubei, 430061 China}\\[-2pt]
{\footnotesize $^{2}$Shenzhen Institute of Artificial Intelligence and Robotics for
Society (AIRS), Shenzhen, Guangdong, China}\\[-2pt]
{\footnotesize $^{3}$School of Science and Engineering, The Chinese University of
Hong Kong, Shenzhen (CUHK-Shenzhen), Shenzhen, Guangdong, China}\\[-2pt]
{\footnotesize $^{4}$Manchester Metropolitan Joint Institute, Hubei University,
Wuhan, China}\\[-2pt]
{\scriptsize \texttt{sj.zhang@hubu.edu.cn},
\texttt{wenhaoli1@link.cuhk.edu.cn}, \texttt{linzhenxin@stu.hubu.edu.cn},
\texttt{sunzhenglong@cuhk.edu.cn}}\\[-2pt]
{\scriptsize $^*$Equal contribution. $^\dagger$Corresponding author.}}

\begin{document}

\maketitle
\vspace{-4pt}
\thispagestyle{plain}
\pagestyle{plain}

\begin{abstract}
Distributed model predictive control (DMPC) often constructs both predictions
and collision constraints from neighbor trajectories, so packet loss can remove
both. We separate these roles: received trajectories drive Koopman-MPC, while
local sensing and shelf geometry define a hard-constrained quadratic program
(QP) that projects the applied input. Its radial demand is the least constant acceleration that keeps a
supporting-plane clearance nonnegative throughout one zero-order-hold interval.
Complementary pair rows recover the coupled demand without exchanging safety
decisions. We give an intersample separation theorem under bounded snapshot and
directional plant errors, an exact max--min test for simultaneous local
feasibility, and a sensing-radius condition for switching interaction graphs. Anticipatory
high-order rows may be relaxed for performance, but the finite-hold rows contain
no safety slack. Matched eight-robot warehouse simulations use a frozen Koopman
model, nonlinear drift, bounded inputs and speed, shelf constraints, a
$120~\mathrm{ms}$ control period, and packet dropout. The full controller is
collision-free in 20/20 matched trials and reaches 160/160 robot goals;
predictive Koopman-MPC without the final projection is collision-free in 1/20
trials. All 38,400 full-method hard-row sets pass the online feasibility test,
and every local QP solves. Five-stream fleet sweeps are collision-free and
hard-row feasible through 16 robots; the 20-robot boundary fails only after the
online margin turns negative, while the reconstructed per-agent critical path
remains below the sampling period. Bounded-sensing and differential-drive
tests provide additional deployment stress.
\end{abstract}

\section{Introduction}

Distributed model predictive control (DMPC) often uses received neighbor
trajectories for both horizon planning and pairwise collision constraints; a
lost packet can therefore affect both. Reactive velocity-obstacle methods use
current geometry~\cite{fiorini1998,van2011orca}, whereas DMPC plans over a
horizon~\cite{camponogara2002,dunbar2006}. We retain horizon planning but construct
the final collision constraints independently of message delivery.

The baseline exposes the failure mode: an eight-robot Koopman-MPC planner reaches
its goals accurately but collides in nearly every packet-loss run. Prediction
supports task progress, yet a missing plan removes the associated pair
constraint. We therefore use exchanged trajectories for nominal planning and a
current local snapshot for constraints on the applied input.

Koopman lifting yields a linear predictor and a condensed quadratic program (QP)
for local MPC~\cite{korda2018koopman}. A second local QP projects the first input
onto sensed pair and shelf constraints. Its hard row is derived for the actual
sample period and a bounded plant residual. For each pair $(i,j)$, the robots
enforce complementary shares of one relative-acceleration demand; the two rows
sum to the coupled condition without centralized or iterative safety
optimization.

This paper makes three contributions.
\begin{enumerate}
    \item A two-channel Koopman-DMPC architecture in which trajectory packets
    affect planning, but every final pair constraint comes from local sensing.
    \item A hard finite-hold projection with bounded-state-error tightening,
    an intersample separation proof, complementary authority allocation, an
    exact multi-row feasibility margin, and a switching-graph condition.
    \item Matched simulation tests with a frozen model, analytic drift bounds,
    hard-QP audits, unequal actuation limits, packet-loss, sensing-error,
    differential-drive, layout, and fleet-size stress tests.
\end{enumerate}
The predictor shapes nominal motion but does not enter the hard feasible set.
Accordingly, the theory covers collision avoidance, not learned-model stability
or global liveness. The experiments report collision avoidance and route-level
completion as separate outcomes.

\section{Related Work}

\textit{Koopman prediction and control.}
Extended dynamic mode decomposition (EDMD) and dynamic mode decomposition with
control (DMDc) learn finite-dimensional predictors from nonlinear
data~\cite{williams2015edmd,proctor2016dmdc}. Koopman predictors support linear
and robust MPC~\cite{korda2018koopman,zhang2022robustkoopman,
bruder2021koopman}, including multi-robot guidance~\cite{zhao2024} and
distributed obstacle-density prediction~\cite{azarbahram2025}. Our frozen
Koopman model supplies the nominal planner. Prediction error can change the
required correction, but it cannot remove a sensed safety row.

\textit{Distributed predictive coordination.}
DMPC spans decomposed receding-horizon control and iterative coordination
~\cite{camponogara2002,dunbar2006,scattolini2009}; learned forecasting and games
also support crowd navigation~\cite{le2023}. Recent DMPC with control barrier
functions (CBFs) uses node--edge alternating direction method of multipliers
(ADMM) updates to optimize a coupled horizon in 15 reported rounds per
cycle~\cite{zeng2026}. Our horizon optimizer is local and nominal; one local QP
enforces current-step safety. This sacrifices coupled horizon optimality but
removes edge copies, dual variables, stopping tests, and safety messages.
Warehouse dispatch and multi-agent path finding (MAPF)
~\cite{wurman2008,honig2019} provide higher-level routes.

\textit{Reactive safety.}
Optimal reciprocal collision avoidance (ORCA) uses reciprocal half-planes in
velocity space~\cite{van2011orca,alonso2013}. CBFs encode forward-invariance
conditions as optimization constraints~\cite{ames2017cbf},
including high-relative-degree constraints~\cite{nguyen2016ecbf} and
multi-robot barrier formulations~\cite{wang2017barrier}. Distributed CBF
implementations~\cite{tan2022}, optimization-based safe
navigation~\cite{mestres2024}, and delay-aware barriers~\cite{ballotta2024}
address distributed enforcement, coupling, and communication delay. Symmetric and
asymmetric pairwise CBF allocations exist; the latter can avoid
relative-velocity information~\cite{dan2025}. Predictive safety filters (PSFs)
likewise separate a task policy from a safety-filtered correction
~\cite{wabersich2021psf}. Sampled-data CBF analyses explicitly account for
zero-order hold and input delay~\cite{singletary2020}.
Table~\ref{tab:mechanisms} summarizes the differences. ORCA projects velocity;
our filter projects acceleration after horizon control. PSFs use a backup
horizon; ours uses a current-step radial lower bound evaluated over the full
hold. Unlike a softened barrier row, the hard row is not relaxed. The auxiliary
variable affects only anticipatory braking.

\begin{table}[t]
\centering
\caption{Comparison of online safety mechanisms.}
\label{tab:mechanisms}
\scriptsize
\begin{tabular}{@{}>{\raggedright\arraybackslash}p{0.15\columnwidth}
                    >{\raggedright\arraybackslash}p{0.25\columnwidth}
                    >{\raggedright\arraybackslash}p{0.49\columnwidth}@{}}
\toprule
Method & Safety decision & Online coupling and role \\
\midrule
ORCA~\cite{van2011orca} & preferred velocity & one-shot reciprocal velocity half-planes; reactive avoidance \\
PSF~\cite{wabersich2021psf} & backup horizon & model-based predictive backup and recovery \\
CBF-DMPC~\cite{zeng2026} & state--input horizon & iterative node--edge consensus toward a coupled solution \\
Ours & held acceleration & independent local QPs; hard finite-hold row and exact local feasibility test \\
\bottomrule
\end{tabular}
\end{table}

\section{Problem Formulation}

Let $t_k=kT$ denote the sampling instants, and index the $N$ robots by
$i\in\mathcal V:=\{1,\ldots,N\}$. Robot $i$ has state
$x_i=[p_i^\top,v_i^\top]^\top\in\R^4$ and applies a constant command
$u_{i,k}\in\cU_i\subset\R^2$ on $[t_k,t_{k+1})$. Its position evolves as
\begin{equation}
 \dot p_i=v_i,\qquad \dot v_i=u_{i,k}+d_i(t),
 \quad t\in[t_k,t_{k+1}),
 \label{eq:dynamics}
\end{equation}
where $d_i$ collects plant drift and acceleration mismatch. The compact convex
set $\cU_i$ contains the applied-command limits; a separate speed bound
$\|v_i(t)\|\le \bar v_i$ is used only to reason about sensing activation.
We write $p_{i,k}=p_i(t_k)$ and $v_{i,k}=v_i(t_k)$ for sampled states.
Position is continuous at sampling instants. If a low-level speed limiter updates
the velocity at an instant $t_k$, the controller uses that measured post-update
velocity as the initial condition of the next hold.
The control objective is to move each robot toward $g_i\in\R^2$ while
maintaining
\begin{equation}
 \|p_i-p_j\|\ge d_{\rm col},\qquad
 \operatorname{dist}(p_i,\mathcal P_o)\ge r_{\rm rob}
 \label{eq:safe-set}
\end{equation}
for every pair and every convex shelf or pallet $\mathcal P_o$.

Robot $i$ receives a predicted trajectory from $j$ only when $j$ lies inside
communication radius $R_{\rm comm}$ and the packet is delivered; $\cN_i^c(k)$
denotes this random message neighborhood. Separately, the local sensor supplies
snapshots $\hat p_{i,k},\hat v_{i,k}$ with known errors
$\|\hat p_{i,k}-p_{i,k}\|\le\varepsilon_i^p$ and
$\|\hat v_{i,k}-v_{i,k}\|\le\varepsilon_i^v$. It activates
$\cN_i^s(k)=\{j\ne i:\|\hat p_{i,k}-\hat p_{j,k}\|\le R_{\rm sens}\}$.
Pair snapshots are synchronized and symmetric: both members use one relative
snapshot, opposite normals, and the same tightened demand. Let $\mathcal O_i(k)$
denote the active shelf faces, whose geometry is locally available. Using the
snapshot-based normals defined below, we assume known directional bounds
\begin{equation}
\begin{aligned}
 n_{ij,k}^\top[d_i(t)-d_j(t)]&\ge-\bar d_{ij},\\
 n_{io,k}^\top d_i(t)&\ge-\bar d_{io},
\end{aligned}\quad t\in[t_k,t_{k+1}).
\label{eq:directional-bounds}
\end{equation}
These are directional bounds on the plant, not learned-model error bounds. We
derive numerical values for the simulated dynamics in
Section~\ref{sec:experiments}. Packet loss affects trajectory exchange but not
the local snapshot used to build the applied-input constraints. Our safety
results concern~\eqref{eq:safe-set}; Koopman prediction accuracy and goal
arrival are evaluated separately.

\section{Method}

Fig.~\ref{fig:pipeline} shows where the two information streams enter the
controller. Received plans enter only the blue MPC layer. The red QP is rebuilt
from current relative states and shelf geometry, then filters the nominal input
before it is applied to the plant.

\begin{figure}[t]
    \centering
    \includegraphics[width=0.96\columnwidth]{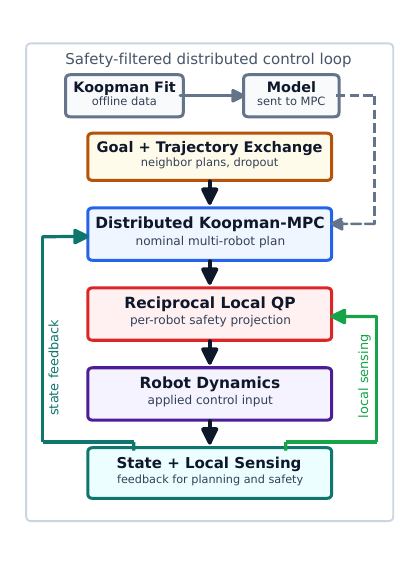}
    \caption{Controller architecture. Each robot solves a Koopman-MPC from
    received trajectories, followed by an independent reciprocal QP from local
    sensing. Packet loss can change the nominal input but does not remove a
    finite-hold row for a sensed pair.}
    \label{fig:pipeline}
\end{figure}

\subsection{Koopman Identification}

We fit one extended dynamic mode decomposition with control (EDMDc) predictor
offline and keep it fixed during evaluation. Let
$z=\psi(x)\in\R^{n_z}$ be a dictionary of observables. For $M$ transitions,
define $Z_-=[\psi(x_0)\ \cdots\ \psi(x_{M-1})]$, $Z_+=[\psi(x_1)\ \cdots\
\ \psi(x_M)]$, $U=[u_0\ \cdots\ u_{M-1}]$, and
$\Omega=[Z_-^\top\ U^\top]^\top$, with
$Z_\pm\in\R^{n_z\times M}$ and $U\in\R^{2\times M}$. EDMDc identifies
\begin{equation}
 z_{k+1}=Az_k+Bu_k,\qquad \hat p_k=C_pz_k,
 \label{eq:koopman}
\end{equation}
by ridge regression with $\epsilon>0$,
\begin{equation}
 [A\ B]=Z_+\Omega^\top(\Omega\Omega^\top+\epsilon I)^{-1}.
\end{equation}
The lift is $\psi(x)=[1;x;\operatorname{vech}(xx^\top);\phi_\kappa(p)]$, where
$\phi_\kappa(p)=[\sin(\kappa p_x),\cos(\kappa p_x),\sin(\kappa p_y),
\cos(\kappa p_y)]^\top$.
$C_p$ and $C_v$ select physical position and velocity. Training covers the
operating envelope, and no training rollout is reused in evaluation.

\subsection{Local Predictive Planning}

At control step $k$, robot $i$ receives neighbor predictions
$\hat p_{j,1:H}$ for $j\in\cN_i^c(k)$. Its local reference $r_{i,\tau}$ is the
goal $g_i$, or a waypoint when a route planner is present. Linearizing pair and
shelf directions about the previous plans gives the local QP
\begin{equation}
\label{eq:mpc}
\begin{aligned}
 \min_{z,u,\xi\ge0}\quad &
 \sum_{\tau=1}^{H}\big(\|C_pz_\tau-r_{i,\tau}\|_{Q_\tau}^{2}
 +\|C_vz_\tau\|_{S}^{2}\big)\\
 &+\sum_{\tau=0}^{H-1}\|u_\tau\|_{R}^{2}
 +\sum_{\tau=1}^{H-1}\|u_\tau-u_{\tau-1}\|_{R_\Delta}^{2}
 +\rho_p\|\xi\|^2\\
 \text{s.t.}\quad &z_0=\psi(x_{i,k}),\quad
 z_{\tau+1}=Az_\tau+Bu_\tau,\\
 &n_{ij,\tau}^\top(C_pz_\tau-\hat p_{j,\tau})+\xi_{ij,\tau}
 \ge d_{\rm plan},\\
 &n_{io,\tau}^\top(C_pz_\tau-q_{o,\tau})+\xi_{io,\tau}
 \ge d_o,\\
 &u_\tau\in\cU_i^{\rm box},\quad C_vz_\tau\in\mathcal V_i.
\end{aligned}
\end{equation}
Here $H$ is the horizon, $\xi$ collects predictive slacks,
$Q_\tau\succeq0$, $S\succeq0$, $R\succ0$,
$R_\Delta\succeq0$, and $\rho_p>0$. The sets $\cU_i^{\rm box}$ and
$\mathcal V_i$ impose componentwise planner bounds. Dynamics and input rows use
$\tau=0{:}H-1$; separation rows use $\tau=1{:}H$, received neighbors
$j\in\cN_i^c(k)$, and active faces $o\in\mathcal O_i(k)$. The vectors
$n_{ij,\tau}$ are unit separation directions; $q_{o,\tau}$ is the closest point
on active shelf face $o$, $n_{io,\tau}$ is its outward normal, and $d_o$ is the
planner shelf clearance. After norm clipping, the first MPC input becomes the
nominal input $\ubar_{i,k}$. If a packet is missing, its predictive pair rows
disappear from~\eqref{eq:mpc}. The planning QP therefore defines nominal motion
but does not impose the sensed constraints on the applied input.

\subsection{Finite-Hold Local Safety Projection}

Use the bounded local snapshots from Section~III; exact sensing has
$\varepsilon_i^p=\varepsilon_i^v=0$. For a sensed pair, set
$n_{ij,k}=(\hat p_{i,k}-\hat p_{j,k})/
\|\hat p_{i,k}-\hat p_{j,k}\|$ and define the tightened quantities
\begin{equation}
\begin{aligned}
 h_{ij,k}&=\|\hat p_{i,k}-\hat p_{j,k}\|-d_{\rm col}
            -\varepsilon_i^p-\varepsilon_j^p,\\
 \nu_{ij,k}&=n_{ij,k}^\top(\hat v_{i,k}-\hat v_{j,k})
            -\varepsilon_i^v-\varepsilon_j^v.
\end{aligned}
 \label{eq:pair-clearance}
\end{equation}
The triangle and Cauchy--Schwarz inequalities give
$n_{ij,k}^\top(p_{i,k}-p_{j,k})-d_{\rm col}\ge h_{ij,k}$ and
$n_{ij,k}^\top(v_{i,k}-v_{j,k})\ge\nu_{ij,k}$.
For $h\ge0$, introduce the least constant radial acceleration that prevents the
quadratic lower bound from becoming negative during one hold:
\begin{equation}
 \phi_T(h,\nu)=\sup_{0<\tau\le T}
 -\frac{2(h+\tau\nu)}{\tau^2}.
 \label{eq:hold-demand}
\end{equation}
The corresponding closed form is
\begin{equation}
 \phi_T(h,\nu)=\begin{cases}
 \nu^2/(2h), & h>0,\ \nu\le-2h/T,\\
 -2(h+T\nu)/T^2, & h>0,\ \nu>-2h/T,\\
 -2\nu/T, & h=0,\ \nu\ge0,\\
 +\infty, & h=0,\ \nu<0.
 \end{cases}
\label{eq:hold-demand-closed}
\end{equation}
The robust hard demand is
\begin{equation}
 \beta^{\rm H}_{ij,k}=\phi_T(h_{ij,k},\nu_{ij,k})+\bar d_{ij}.
 \label{eq:hard-demand}
\end{equation}

The hard row reacts only to motion that can violate separation within the
current hold. We retain the high-order expression as an anticipatory target.
Let $k_v,k_p>0$ and define the directional authorities
\begin{equation}
 c_{ij}^i=\max_{u\in\cU_i}n_{ij,k}^\top u,\qquad
 c_{ij}^j=\max_{u\in\cU_j}n_{ji,k}^\top u.
 \label{eq:authority}
\end{equation}
Let $b^{\rm c}_{ij}=-k_v\nu_{ij,k}-k_ph_{ij,k}+\bar d_{ij}$. Then
\begin{equation}
 \beta^{\rm A}_{ij,k}=\max\!\left\{\beta^{\rm H}_{ij,k},
 \min\{b^{\rm c}_{ij},c_{ij}^i+c_{ij}^j\}\right\}.
 \label{eq:anticipatory-demand}
\end{equation}
The cap prevents an isolated pair target from exceeding its combined radial
authority. The input sets are known to both members of a sensed pair. For
heterogeneous robots we use complementary weights
\begin{equation}
 \alpha_{ij}=\begin{cases}
 \dfrac{c_{ij}^i}{c_{ij}^i+c_{ij}^j},&c_{ij}^i+c_{ij}^j>0,\\[2mm]
 \dfrac12,&c_{ij}^i+c_{ij}^j=0,
 \end{cases}
 \qquad \alpha_{ji}=1-\alpha_{ij};
 \label{eq:authority-split}
\end{equation}
equal robots recover the half split.

For obstacle $o$, take a supporting plane $(q_{io,k},n_{io,k})$ at the
estimated position and set
$h_{io,k}=n_{io,k}^\top(\hat p_{i,k}-q_{io,k})-r_{\rm rob}-\varepsilon_i^p$ and
$\nu_{io,k}=n_{io,k}^\top\hat v_{i,k}-\varepsilon_i^v$. We define
$\beta^{\rm H}_{io,k}=\phi_T(h_{io,k},\nu_{io,k})+\bar d_{io}$ and
\begin{equation}
 \beta^{\rm A}_{io,k}=\max\{\beta^{\rm H}_{io,k},
 \min(-k_v\nu_{io,k}-k_ph_{io,k}+\bar d_{io},c_{io}^i)\},
\end{equation}
where $c_{io}^i=\max_{u\in\cU_i}n_{io,k}^\top u$.

At a fixed sample, we suppress the index $k$ in the following QP. Robot $i$
solves for $u_i$ and $\zeta_i$. The variables $\zeta_i$ relax only the
anticipatory targets; they do not appear in the hard rows.
\begin{equation}
\label{eq:localqp}
\begin{aligned}
 \min_{u_i,\zeta_i\ge0}\quad &
 \|u_i-\ubar_i\|^2+\rho_2\|\zeta_i\|^2
 +\rho_1\mathbf1^\top\zeta_i\\
 \mathrm{s.t.}\quad &u_i\in\cU_i,\\
 &n_{ij}^\top u_i\ge\alpha_{ij}\beta^{\rm H}_{ij},
 &&j\in\cN_i^s,\\
 &n_{ij}^\top u_i+\zeta_{ij}^i
 \ge\alpha_{ij}\beta^{\rm A}_{ij},&&j\in\cN_i^s,\\
 &n_{io}^\top u_i\ge\beta^{\rm H}_{io},
 &&o\in\mathcal O_i,\\
 &n_{io}^\top u_i+\zeta_{io}^i\ge\beta^{\rm A}_{io},
 &&o\in\mathcal O_i.
\end{aligned}
\end{equation}
Thus a large $\zeta_i$ means that early braking was sacrificed to remain close
to the MPC command; it does not weaken the finite-hold condition. We use a
16-sided polygon inscribed in the acceleration ball for $\cU_i$ and solve the
QP with the Operator Splitting Quadratic Program solver (OSQP)
~\cite{stellato2020osqp}. The weights $\rho_1,\rho_2>0$ penalize
anticipatory relaxation.

\subsection{Online Decomposition and Feasibility Check}

Let the hard rows of robot $i$ be $a_{i\ell}^\top u_i\ge r_{i\ell}$,
$\ell\in\mathcal L_i(k)$, and define
\begin{equation}
 \mu_i(k)=\max_{u\in\cU_i}\min_{\ell\in\mathcal L_i(k)}
 (a_{i\ell}^\top u-r_{i\ell}).
 \label{eq:feasibility-margin}
\end{equation}
where $\mu_i(k)=+\infty$ when $\mathcal L_i(k)$ is empty.
For polygonal $\cU_i$, this is a linear program in $(u_x,u_y,\mu)$. We solve it
before~\eqref{eq:localqp} and log a feasibility fault when $\mu_i<0$. In two
dimensions its vertices consist of input-polygon vertices, intersections of one
input facet with two row planes, and intersections of three row planes; the
implementation evaluates these candidates in vectorized form. No safety slack
is used in the hard rows; a negative $\mu_i$ is reported as infeasibility. With
$m_i$ local rows, the
projection has at most $2+m_i$ variables and $16+3m_i$ scalar inequalities. Its
size is independent of the MPC horizon and the fleet size outside the sensed
degree.

\begin{algorithm}[H]
\caption{Distributed Koopman-MPC with finite-hold projection}
\label{alg:online}
\begin{algorithmic}[1]
\Require frozen predictor; sampling, sensing, and plant bounds
\For{each sample $k$}
    \For{each robot $i$ in parallel}
        \State receive available plans and solve~\eqref{eq:mpc} for $\ubar_{i,k}$
        \State form bounded local snapshots and supporting obstacle planes
        \State build $\beta^{\rm H}$ from~\eqref{eq:hold-demand} and $\beta^{\rm A}$
        \State solve~\eqref{eq:feasibility-margin}; require $\mu_i(k)\ge0$
        \State solve~\eqref{eq:localqp} and hold $u_{i,k}$ for $T$ seconds
    \EndFor
\EndFor
\end{algorithmic}
\end{algorithm}

\section{Theoretical Results}
\label{sec:theory}

The Koopman predictor determines $\ubar_i$ but not the hard feasible set. The
following results therefore make no claim about predictor error, MPC recursive
feasibility, or goal convergence. Assume $\cU_i$ is nonempty, compact, and
convex, $0\in\cU_i$, and $\rho_1,\rho_2>0$.

\begin{assumption}
\label{ass:online}
At each $t_k$, the snapshot errors satisfy the stated bounds. Pair sensing is
synchronized and symmetric, so both members use the same $h_{ij,k}$,
$\nu_{ij,k}$, and $\beta^{\rm H}_{ij,k}$ with opposite normals. Every obstacle
row uses a unit outward normal of a supporting plane, and the
directional residual bounds in~\eqref{eq:directional-bounds} hold throughout
$[t_k,t_{k+1})$. Commands returned by~\eqref{eq:localqp} are applied with
zero-order hold.
\end{assumption}

\begin{proposition}
\label{prop:residual-support}
Let $\mathcal D_i(k)\subset\R^2$ be compact, contain the origin, and satisfy
$d_i(t)\in\mathcal D_i(k)$ throughout the hold. For the support function
$\sigma_{\mathcal D}(q):=\max_{d\in\mathcal D}q^\top d$, the choices
\begin{equation}
\begin{aligned}
 \bar d_{ij,k}&=\sigma_{\mathcal D_i(k)}(-n_{ij,k})
               +\sigma_{\mathcal D_j(k)}(n_{ij,k}),\\
 \bar d_{io,k}&=\sigma_{\mathcal D_i(k)}(-n_{io,k})
\end{aligned}
\label{eq:support-bounds}
\end{equation}
satisfy~\eqref{eq:directional-bounds} and are tight over
$\mathcal D_i(k)\times\mathcal D_j(k)$. Moreover, if
$\mathcal D_i(k)=\bigoplus_{r=1}^{s}\mathcal D_i^{(r)}(k)$, then
\begin{equation}
 \sigma_{\mathcal D_i(k)}(q)
 =\sum_{r=1}^{s}\sigma_{\mathcal D_i^{(r)}(k)}(q).
 \label{eq:minkowski-support}
\end{equation}
Consequently, bounds $\|d_i^{(r)}\|\le\delta_i^{(r)}$ imply
$\bar d_{io,k}\le\sum_r\delta_i^{(r)}$ and
$\bar d_{ij,k}\le\sum_r(\delta_i^{(r)}+\delta_j^{(r)})$.
\end{proposition}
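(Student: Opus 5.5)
The plan is to verify the directional inequalities in~\eqref{eq:directional-bounds} directly from the definition of the support function, then show tightness by exhibiting maximizers, and finally prove the Minkowski additivity~\eqref{eq:minkowski-support} with a two-sided inequality.

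\emph{Validity.} For the pair row, fix $t\in[t_k,t_{k+1})$ and write
\[
n_{ij,k}^\top[d_i(t)-d_j(t)]=-\big((-n_{ij,k})^\top d_i(t)+n_{ij,k}^\top d_j(t)\big).
\]
Since $d_i(t)\in\mathcal D_i(k)$ and $d_j(t)\in\mathcal D_j(k)$, each inner product is bounded above by the corresponding support value. This gives $n_{ij,k}^\top[d_i-d_j]\ge-\bar d_{ij,k}$. The obstacle row follows identically with only the $\mathcal D_i$ term. The maxima defining $\sigma$ exist because the sets are compact. Containment of the origin gives $\sigma\ge0$, so the bounds are nonnegative, which is consistent with their role as tightenings.

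\emph{Tightness.} Choose $d_i^\star\in\arg\max_{d\in\mathcal D_i}(-n_{ij,k})^\top d$ and $d_j^\star\in\arg\max_{d\in\mathcal D_j}n_{ij,k}^\top d$. The product set $\mathcal D_i\times\mathcal D_j$ is a product, so the pair $(d_i^\star,d_j^\star)$ is admissible, and it attains equality. Hence no smaller constant is valid uniformly over $\mathcal D_i(k)\times\mathcal D_j(k)$. The same argument with $d_i^\star$ alone handles $\bar d_{io,k}$.

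\emph{Minkowski sums and the norm corollary.} For $d=\sum_r d^{(r)}$ we have $q^\top d=\sum_r q^\top d^{(r)}\le\sum_r\sigma_{\mathcal D^{(r)}}(q)$, which gives one direction. Choosing a maximizer in each summand and adding them produces an element of the sum that attains $\sum_r\sigma_{\mathcal D^{(r)}}(q)$, which gives the other. Compactness of each $\mathcal D^{(r)}$ is needed here; I would state it as implicit, noting that a finite Minkowski sum of compact sets is compact and contains the origin when each summand does. Finally, $\|d^{(r)}\|\le\delta^{(r)}$ together with Cauchy--Schwarz for a unit $q$ gives $\sigma_{\mathcal D^{(r)}}(q)\le\delta^{(r)}$. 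Summing these bounds yields both stated estimates, using that the normals $n_{ij,k}$ and $n_{io,k}$ are unit vectors by construction.

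\emph{Main obstacle.} The only delicate point is interpreting "tight". Tightness holds over the product set, not necessarily for the actual realized disturbance trajectory. Attaining the bound also requires that $d_i$ and $d_j$ can independently take their extremal values, which the product structure guarantees.
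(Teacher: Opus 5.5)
Your proposal is correct and follows the paper's proof essentially step for step. It uses the same rewriting of $n_{ij,k}^\top(d_i-d_j)$ against the two support values, tightness by choosing support maximizers independently over the product set, Minkowski additivity by selecting a maximizer in each summand, and Cauchy--Schwarz with unit normals for the norm bounds; your two-sided inequality and your note that the summands should be compact (or $\max$ read as $\sup$) only make explicit the paper's one-line interchange of maximum and sum.
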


\begin{proof}
For any $d_i\in\mathcal D_i(k)$ and $d_j\in\mathcal D_j(k)$,
\begin{align*}
 n_{ij,k}^\top(d_i-d_j)
 &=-(-n_{ij,k})^\top d_i-n_{ij,k}^\top d_j\\
 &\ge-\sigma_{\mathcal D_i(k)}(-n_{ij,k})
       -\sigma_{\mathcal D_j(k)}(n_{ij,k}).
\end{align*}
Replacing $n_{ij,k}$ by $n_{io,k}$ and omitting $d_j$ proves the obstacle
inequality. Compactness supplies support maximizers; choosing them independently
attains equality in both bounds, which proves tightness. The origin assumption
makes the bounds nonnegative.

For the Minkowski sum, independent component choices give
\begin{align*}
 \sigma_{\mathcal D_i(k)}(q)
 &=\max_{d_i^{(r)}\in\mathcal D_i^{(r)}(k)}
   q^\top\sum_{r=1}^{s}d_i^{(r)}\\
 &=\sum_{r=1}^{s}\max_{d_i^{(r)}\in\mathcal D_i^{(r)}(k)}
   q^\top d_i^{(r)},
\end{align*}
which is~\eqref{eq:minkowski-support}. A Euclidean ball of radius $\delta$ has
$\sigma(q)=\delta\|q\|$ by Cauchy--Schwarz. Substitution into
\eqref{eq:support-bounds}, with unit normals, yields the norm-bound formulas.
\end{proof}

\begin{lemma}
\label{lem:hold-demand}
For $h\ge0$,~\eqref{eq:hold-demand-closed} equals the supremum in
\eqref{eq:hold-demand}. Whenever this value is finite,
\begin{equation}
 h+\tau\nu+\tfrac12\tau^2a\ge0\quad\forall\tau\in[0,T]
 \quad\Longleftrightarrow\quad a\ge\phi_T(h,\nu).
 \label{eq:hold-demand-equivalence}
\end{equation}
\end{lemma}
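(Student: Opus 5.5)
The plan is to study the one-variable function $g(\tau)=-2(h+\tau\nu)/\tau^2$ on $(0,T]$ and compute its supremum case by case. Afterwards, the equivalence \eqref{eq:hold-demand-equivalence} will follow by dividing the quadratic inequality by $\tau^2/2$. Substituting $s=1/\tau\in[1/T,\infty)$ gives $g=-2hs^2-2\nu s$. For $h>0$ this is a strictly concave quadratic in $s$, with unconstrained maximizer $s^\star=-\nu/(2h)$ and maximum value $\nu^2/(2h)$.

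\textbf{Case $h>0$.} If $s^\star\ge 1/T$, which is equivalent to $\nu\le -2h/T$, the maximizer is feasible and the supremum is attained and equals $\nu^2/(2h)$. Otherwise $s^\star<1/T$, so $g$ is decreasing in $s$ on $[1/T,\infty)$. The supremum is then attained at $s=1/T$, that is, at $\tau=T$, and equals $-2(h+T\nu)/T^2$. This gives the first two lines of \eqref{eq:hold-demand-closed}.

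\textbf{Case $h=0$.} Here $g(\tau)=-2\nu/\tau$. If $\nu\ge0$, this is nondecreasing in $\tau$, so the supremum is $-2\nu/T$, attained at $\tau=T$. If $\nu<0$, then $g\to+\infty$ as $\tau\to0^+$, so the supremum is $+\infty$. This matches the last two lines of \eqref{eq:hold-demand-closed}.

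\textbf{Equivalence.} Assume $\phi_T(h,\nu)<\infty$. At $\tau=0$ the quadratic inequality reads $h\ge0$, which holds by hypothesis. For $\tau\in(0,T]$, dividing by $\tau^2/2>0$ shows that $h+\tau\nu+\tfrac12\tau^2 a\ge0$ is equivalent to $a\ge g(\tau)$. So the inequality holds for all $\tau\in[0,T]$ if and only if $a\ge\sup_{(0,T]}g=\phi_T(h,\nu)$.

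This direct argument needs no attainment of the supremum, so the boundary cases cause no difficulty. The only point requiring care is the switching threshold $\nu=-2h/T$, where both formulas agree: each gives $2h/T^2$. This continuity check confirms that the case split is consistent. The main obstacle is simply bookkeeping the sign conditions in the substitution $s=1/\tau$, which I expect to be routine.
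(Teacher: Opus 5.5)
Your proof is correct and follows the paper's argument. It uses the same function $-2(h+\tau\nu)/\tau^2$, the same case split ($h>0$ versus $h=0$, with threshold $\nu=-2h/T$), and the same reduction of the equivalence to a pointwise bound after dividing by $\tau^2/2$. The only differences are cosmetic: you locate the maximizer by maximizing a concave quadratic in $s=1/\tau$ on $[1/T,\infty)$ where the paper uses the sign of $f'(\tau)=2(2h+\nu\tau)/\tau^3$, and you check $\tau=0$ directly via $h\ge0$ where the paper appeals to continuity.
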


\begin{proof}
For $0<\tau\le T$, let
$f(\tau)=-2(h+\tau\nu)/\tau^2$. Since $\tau^2/2>0$, the inequality on the
left of~\eqref{eq:hold-demand-equivalence} holds for every positive $\tau$ if
and only if $a\ge f(\tau)$ for every such $\tau$. Thus its least admissible
value is $\sup_{0<\tau\le T}f(\tau)$; continuity supplies the endpoint
$\tau=0$.

For $h>0$, $f(\tau)\to-\infty$ as $\tau\downarrow0$ and
\begin{equation*}
 f'(\tau)=\frac{2(2h+\nu\tau)}{\tau^3}.
\end{equation*}
If $\nu\le-2h/T$, the stationary point
$\tau_*=-2h/\nu\in(0,T]$ is the maximizer and
$f(\tau_*)=\nu^2/(2h)$. Otherwise $f$ is increasing at every point up to its
maximum at $T$, which gives $-2(h+T\nu)/T^2$. For $h=0$,
$f(\tau)=-2\nu/\tau$: it is unbounded above when $\nu<0$, and its maximum is
$-2\nu/T$ when $\nu\ge0$. These are precisely the four cases in
\eqref{eq:hold-demand-closed}, and the equivalence follows from the supremum
argument above.
\end{proof}

\begin{lemma}
\label{lem:composition}
For a symmetrically sensed pair with $\alpha_{ij}+\alpha_{ji}=1$, the two local
hard rows imply
\begin{equation}
 n_{ij,k}^\top(u_{i,k}-u_{j,k})\ge\beta^{\rm H}_{ij,k}.
 \label{eq:coupled-hard-row}
\end{equation}
\end{lemma}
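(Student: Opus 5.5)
The plan is to add the two local hard rows of robots $i$ and $j$, after rewriting robot $j$'s row in robot $i$'s normal direction. By Assumption~\ref{ass:online}, both members of the pair use the same snapshot quantities $h_{ij,k}$, $\nu_{ij,k}$, the same tightened demand $\beta^{\rm H}_{ij,k}=\beta^{\rm H}_{ji,k}$, and opposite normals $n_{ji,k}=-n_{ij,k}$.

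The hard row of robot $i$ in \eqref{eq:localqp} reads $n_{ij,k}^\top u_{i,k}\ge\alpha_{ij}\beta^{\rm H}_{ij,k}$. Robot $j$ solves its own copy of \eqref{eq:localqp}, in which $i\in\cN_j^s$; this membership holds because $\|\hat p_{i,k}-\hat p_{j,k}\|$ is symmetric. Its row is $n_{ji,k}^\top u_{j,k}\ge\alpha_{ji}\beta^{\rm H}_{ji,k}$. Substituting $n_{ji,k}=-n_{ij,k}$ and $\beta^{\rm H}_{ji,k}=\beta^{\rm H}_{ij,k}$ turns this into $-n_{ij,k}^\top u_{j,k}\ge\alpha_{ji}\beta^{\rm H}_{ij,k}$.

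Adding the two inequalities gives $n_{ij,k}^\top(u_{i,k}-u_{j,k})\ge(\alpha_{ij}+\alpha_{ji})\beta^{\rm H}_{ij,k}=\beta^{\rm H}_{ij,k}$, which is \eqref{eq:coupled-hard-row}. The identity $\alpha_{ij}+\alpha_{ji}=1$ is exactly the complementarity built into \eqref{eq:authority-split}. For the case $c^i_{ij}+c^j_{ij}=0$, both weights equal $1/2$, so the sum is still $1$.

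The step needing care is the consistency of the weights and the finiteness of the demand. First, $\alpha_{ji}$ as computed by robot $j$ must coincide with $1-\alpha_{ij}$. I would check this from \eqref{eq:authority}: with the indices swapped, $c^j_{ji}=\max_{u\in\cU_j}n_{ji,k}^\top u=c^i_{ij}$'s counterpart $c^j_{ij}$, and likewise $c^i_{ji}=c^i_{ij}$. The sets $\cU_i,\cU_j$ are known to both members, so formula \eqref{eq:authority-split} evaluated at robot $j$ yields $c^j_{ij}/(c^i_{ij}+c^j_{ij})=1-\alpha_{ij}$.

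Second, the rows are meaningful only when $\beta^{\rm H}_{ij,k}<\infty$. If $\beta^{\rm H}_{ij,k}=+\infty$, the hard rows are infeasible and the QP returns no command, which is the logged fault $\mu_i<0$. The lemma therefore concerns applied commands satisfying both rows, and I would state this implicitly by assuming both local hard rows hold.
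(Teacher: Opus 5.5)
Your proposal is correct and matches the paper's proof: you rewrite robot $j$'s hard row using $n_{ji,k}=-n_{ij,k}$ and the shared demand $\beta^{\rm H}_{ij,k}$, add it to robot $i$'s row, and use $\alpha_{ij}+\alpha_{ji}=1$. Your extra checks on weight consistency and finiteness are harmless side remarks, since the paper defines $\alpha_{ji}=1-\alpha_{ij}$ directly and the lemma already assumes both rows hold.
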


\begin{proof}
At sample $k$, robot $i$ enforces
$n_{ij,k}^\top u_{i,k}\ge\alpha_{ij}\beta^{\rm H}_{ij,k}$, while robot $j$
enforces
$n_{ji,k}^\top u_{j,k}\ge\alpha_{ji}\beta^{\rm H}_{ij,k}$.
Since $n_{ji,k}=-n_{ij,k}$, adding the rows gives
\begin{equation*}
 n_{ij,k}^\top(u_{i,k}-u_{j,k})
 \ge(\alpha_{ij}+\alpha_{ji})\beta^{\rm H}_{ij,k},
\end{equation*}
which is~\eqref{eq:coupled-hard-row}.
\end{proof}

\begin{proposition}
\label{prop:wellposed}
The hard rows of robot $i$ are jointly feasible if and only if $\mu_i(k)\ge0$.
Whenever this holds,~\eqref{eq:localqp} has a unique optimizer. Its feasible set
is independent of trajectory packets.
\end{proposition}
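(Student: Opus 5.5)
The proof has three parts: the feasibility equivalence, existence and uniqueness of the optimizer, and independence from trajectory packets.

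\emph{Feasibility equivalence.} Write $g(u)=\min_{\ell\in\mathcal L_i(k)}(a_{i\ell}^\top u-r_{i\ell})$. As a finite minimum of affine functions, $g$ is continuous and concave. Since $\cU_i$ is nonempty and compact, the maximum in~\eqref{eq:feasibility-margin} is attained at some $u^\star\in\cU_i$. If $\mu_i(k)\ge0$, then $g(u^\star)\ge0$, so $u^\star$ satisfies every hard row. Conversely, any $u\in\cU_i$ satisfying all hard rows has $g(u)\ge0$, so $\mu_i(k)\ge g(u)\ge0$. The empty-row case is covered by the convention $\mu_i=+\infty$ together with $\cU_i\neq\emptyset$. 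One caveat concerns rows with $\beta^{\rm H}=+\infty$ (the case $h=0,\nu<0$ in~\eqref{eq:hold-demand-closed}). For such a row I would set $r_{i\ell}=+\infty$, which gives $g\equiv-\infty$ and $\mu_i=-\infty$. The equivalence still holds because that row is unsatisfiable. Alternatively, I would restrict the statement to finite demands.

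\emph{Existence and uniqueness.} Let $F_u$ be the intersection of $\cU_i$ with the hard half-planes. It is nonempty by the first part, and it is compact and convex. The soft rows constrain $(u_i,\zeta_i)$ jointly. For any $u_i\in F_u$, choose $\zeta$ large enough, for instance $\zeta_{ij}^i=\max\{0,\alpha_{ij}\beta^{\rm A}_{ij}-n_{ij}^\top u_i\}$, which is finite whenever $\beta^{\rm A}$ is finite. This shows the joint feasible set is a nonempty closed convex polyhedral set. The objective is $\|u_i-\ubar_i\|^2+\rho_2\|\zeta_i\|^2+\rho_1\mathbf1^\top\zeta_i$. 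Because $\rho_2>0$, it is strongly convex in $(u_i,\zeta_i)$. It is also coercive, since $u_i$ lies in the compact set $\cU_i$ and the $\zeta_i$ terms grow without bound. A strongly convex, continuous, coercive function on a nonempty closed convex set has exactly one minimizer. Existence comes from a compact sublevel set, and uniqueness from strict convexity: the midpoint of two distinct minimizers would give a strictly smaller value.

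\emph{Packet independence.} I would trace each datum in~\eqref{eq:localqp}. The quantities $h,\nu,n$ come from local snapshots~\eqref{eq:pair-clearance}. The bounds $\bar d$ are from~\eqref{eq:directional-bounds}. The shelf geometry and $\cU_i,\cU_j$ are locally known. The authorities and weights follow from~\eqref{eq:authority} and~\eqref{eq:authority-split}, and the index sets are $\cN_i^s$ and $\mathcal O_i$. None of these depend on $\cN_i^c(k)$ or on received $\hat p_{j,\tau}$. Trajectory packets enter only through $\ubar_i$, which appears in the objective but not in the constraints. Hence the feasible set, and the value of $\mu_i$, are unaffected by packets.

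The main obstacle is the bookkeeping for infinite or degenerate demands, together with checking that the soft rows never create infeasibility. The slack construction above handles the latter. For the former, I would first try the extended-real convention in the first part, and fall back to stating finiteness as an assumption if that convention becomes awkward.
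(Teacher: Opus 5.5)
Your proposal is correct and follows the paper's proof essentially step for step. You use attainment of the max--min over the compact set $\cU_i$ for the equivalence, the positive-part slack choice to show the anticipatory rows never cause infeasibility, a positive-definite Hessian plus coercivity for a unique minimizer, and the observation that packets enter only through $\ubar_i$ in the objective. Your remark on rows with $\beta^{\rm H}=+\infty$ is a sensible refinement the paper leaves implicit, since such rows simply make the hard rows infeasible. Calling the joint feasible set polyhedral requires $\cU_i$ to be a polytope, but your argument only uses that the set is closed and convex, so this does not affect correctness.
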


\begin{proof}
If $\mathcal L_i(k)$ is empty, every $u\in\cU_i$ satisfies the hard rows by
convention. Otherwise, compactness of $\cU_i$ and continuity of the pointwise
minimum make the maximum in~\eqref{eq:feasibility-margin} attainable. Let
$u_i^\star$ be a maximizer. If $\mu_i(k)\ge0$, then
\begin{equation*}
 a_{i\ell}^\top u_i^\star-r_{i\ell}\ge\mu_i(k)\ge0
 \qquad\forall\ell\in\mathcal L_i(k),
\end{equation*}
so every hard row is satisfied. Conversely, any hard-row-feasible $u$ makes
the inner minimum nonnegative; maximizing over $u$ yields $\mu_i(k)\ge0$.

Fix a hard-row-feasible $u$. Each anticipatory inequality can be satisfied by
choosing its component of $\zeta_i$ no smaller than the positive part of its
row deficit. Hence~\eqref{eq:localqp} is feasible. Its feasible set is closed
and convex. The quadratic objective has positive-definite Hessian on
$(u_i,\zeta_i)$ because $\rho_2>0$; it is also coercive in $\zeta_i$, while
$u_i$ lies in the compact set $\cU_i$. A minimizer therefore exists and strict
convexity makes it unique. Finally, trajectory packets change only $\ubar_i$ in
the objective. Every constraint is built from the local snapshot, geometry,
input sets, and plant bounds.
\end{proof}

\begin{proposition}
\label{prop:margin-dual}
Suppose $\mathcal L_i(k)$ is nonempty and
$\cU_i=\{u:F_i u\le g_i\}$ is a compact polytope. Then
\begin{equation}
\begin{aligned}
 \mu_i(k)=\min_{\lambda,\gamma}\quad
 &g_i^\top\lambda-\sum_{\ell\in\mathcal L_i(k)}\gamma_\ell r_{i\ell}\\
 \mathrm{s.t.}\quad
 &F_i^\top\lambda=\sum_{\ell\in\mathcal L_i(k)}\gamma_\ell a_{i\ell},\\
 &\mathbf1^\top\gamma=1,\qquad \lambda\ge0,\quad\gamma\ge0.
\end{aligned}
\label{eq:margin-dual}
\end{equation}
\end{proposition}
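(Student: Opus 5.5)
The plan is to write \eqref{eq:feasibility-margin} as a linear program and apply LP strong duality. With $\cU_i=\{u:F_iu\le g_i\}$ and $\mathcal L_i(k)$ nonempty, the epigraph form is
\begin{equation*}
 \mu_i(k)=\max_{u,\mu}\ \mu\quad\text{s.t.}\quad F_iu\le g_i,\qquad
 \mu-a_{i\ell}^\top u\le -r_{i\ell}\ \ \forall\ell\in\mathcal L_i(k).
\end{equation*}
For fixed $u$, the largest admissible $\mu$ is $\min_\ell(a_{i\ell}^\top u-r_{i\ell})$, so this LP equals \eqref{eq:feasibility-margin}. Its optimal value is finite. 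It is feasible, because $\cU_i$ is nonempty (it contains $0$) and $\mu$ can be taken very negative. It is bounded above, because $\mu\le a_{i1}^\top u-r_{i1}$ and $u$ ranges over the compact $\cU_i$. The maximum is attained, as already noted in the proof of Proposition~\ref{prop:wellposed}.

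Next I form the Lagrangian, with multiplier $\lambda\ge0$ on $F_iu\le g_i$ and $\gamma_\ell\ge0$ on the row constraints:
\begin{equation*}
 \mu+\lambda^\top(g_i-F_iu)+\sum_\ell\gamma_\ell(a_{i\ell}^\top u-r_{i\ell}-\mu).
\end{equation*}
Both $u$ and $\mu$ are free, so the supremum over them is finite only when two conditions hold. The coefficient of $\mu$ must vanish, which gives $\mathbf1^\top\gamma=1$. The coefficient of $u$ must vanish, which gives $F_i^\top\lambda=\sum_\ell\gamma_\ell a_{i\ell}$. On this set the dual function equals $g_i^\top\lambda-\sum_\ell\gamma_\ell r_{i\ell}$, which is exactly \eqref{eq:margin-dual}.

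Weak duality then follows directly. For any primal-feasible $(u,\mu)$ and dual-feasible $(\lambda,\gamma)$, the objective satisfies $\mu\le$ (dual objective), because $\lambda^\top(g_i-F_iu)\ge0$ and $\gamma$ is a convex weight on nonnegative slacks. Since the primal is a feasible LP with finite optimum, LP strong duality (all constraints are linear, so no Slater condition is needed) shows that the dual is feasible and attains the same value. This justifies writing $\min$ rather than $\inf$ in \eqref{eq:margin-dual}.

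The only real obstacle is the careful bookkeeping of signs and of the free variable $\mu$, which is what produces the simplex constraint on $\gamma$. It also helps to see the geometric meaning: the dual searches over convex combinations of the row normals that $\cU_i$'s facets can certify. For a more self-contained argument, I would instead prove the identity by a minimax argument. Write $\min_\ell(\cdot)=\min_{\gamma\in\Delta}\sum_\ell\gamma_\ell(\cdot)$, exchange max and min using Sion's theorem (compact convex $\cU_i$, bilinear payoff), and finish with LP duality for $\max_{u\in\cU_i}(\sum_\ell\gamma_\ell a_{i\ell})^\top u=\min\{g_i^\top\lambda:F_i^\top\lambda=\sum_\ell\gamma_\ell a_{i\ell},\ \lambda\ge0\}$. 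Here the inner primal is feasible and bounded by compactness.
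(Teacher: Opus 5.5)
Your proposal is correct and follows the paper's proof essentially line for line: you write the epigraph LP with an auxiliary margin variable, place multipliers $\lambda$ on the input facets and $\gamma$ on the hard rows, read off the two dual equality constraints from stationarity in $u$ and in the free margin variable, and invoke LP strong duality, justified by primal feasibility and a finite optimum from compactness. Your explicit weak-duality check and the optional Sion minimax route are correct extras that the paper does not need.
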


\begin{proof}
Introduce an auxiliary variable $s$. Equation~\eqref{eq:feasibility-margin} is
the linear program
\begin{equation*}
 \max_{u,s}\ s\quad
 \mathrm{s.t.}\quad F_i u\le g_i,\qquad
 -a_{i\ell}^\top u+s\le-r_{i\ell}\quad\forall\ell.
\end{equation*}
Assign nonnegative dual variables $\lambda$ to the input facets and
$\gamma_\ell$ to the hard rows. Stationarity with respect to $u$ and $s$ gives
$F_i^\top\lambda=\sum_\ell\gamma_\ell a_{i\ell}$ and
$\sum_\ell\gamma_\ell=1$, respectively. The dual objective is
$g_i^\top\lambda-\sum_\ell\gamma_\ell r_{i\ell}$. The primal is feasible and
has a finite optimum because $\cU_i$ is nonempty and compact. Linear-program
strong duality therefore yields~\eqref{eq:margin-dual}.
\end{proof}

The weights $\gamma$ identify a convex combination of simultaneously limiting
rows. A negative dual value therefore exposes a local conflict among pair,
obstacle, and input constraints rather than a numerical QP failure.

\begin{proposition}
\label{prop:allocation}
For an isolated pair, the coupled hard row is feasible if and only if
$\beta^{\rm H}_{ij,k}\le c_{ij}^i+c_{ij}^j$. Under this condition, the split
in~\eqref{eq:authority-split} makes both local hard rows feasible.
\end{proposition}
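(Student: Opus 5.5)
Because the pair is isolated, the only hard constraints involved are $u_i\in\cU_i$, $u_j\in\cU_j$, and the coupled row~\eqref{eq:coupled-hard-row}. The plan is to reduce feasibility of that row to a scalar comparison using the directional authorities~\eqref{eq:authority}, and then check the split~\eqref{eq:authority-split} row by row.

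For necessity, assume some $u_i\in\cU_i$ and $u_j\in\cU_j$ satisfy $n_{ij,k}^\top(u_i-u_j)\ge\beta^{\rm H}_{ij,k}$. Write the left side as $n_{ij,k}^\top u_i+n_{ji,k}^\top u_j$, using $n_{ji,k}=-n_{ij,k}$. Each term is bounded by its maximum, which gives $\beta^{\rm H}_{ij,k}\le c_{ij}^i+c_{ij}^j$.

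For sufficiency, compactness of $\cU_i$ and $\cU_j$ lets us pick maximizers $u_i^\star$ and $u_j^\star$ attaining $c_{ij}^i$ and $c_{ij}^j$. Their difference satisfies the coupled row whenever $\beta^{\rm H}_{ij,k}\le c_{ij}^i+c_{ij}^j$. Here $\beta^{\rm H}_{ij,k}$ is finite, since $+\infty$ would violate the inequality.

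For the split, it suffices to show $\alpha_{ij}\beta^{\rm H}_{ij,k}\le c_{ij}^i$ and $\alpha_{ji}\beta^{\rm H}_{ij,k}\le c_{ij}^j$, because the maximizers $u_i^\star$ and $u_j^\star$ then satisfy the local rows. The hypothesis $0\in\cU_i$ gives $c_{ij}^i\ge0$ and $c_{ij}^j\ge0$, so $\alpha_{ij}\in[0,1]$. I would split into two cases.
\begin{itemize}
\item If $c_{ij}^i+c_{ij}^j>0$, then $\alpha_{ij}\beta^{\rm H}_{ij,k}=c_{ij}^i\,\beta^{\rm H}_{ij,k}/(c_{ij}^i+c_{ij}^j)$. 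When $\beta^{\rm H}_{ij,k}\ge0$, the ratio is at most $1$, so this is at most $c_{ij}^i$. When $\beta^{\rm H}_{ij,k}<0$, the left side is $\le0\le c_{ij}^i$. The bound for robot $j$ is symmetric, with $\alpha_{ji}=c_{ij}^j/(c_{ij}^i+c_{ij}^j)$.
\item If $c_{ij}^i=c_{ij}^j=0$, the hypothesis forces $\beta^{\rm H}_{ij,k}\le0$. Then $\tfrac12\beta^{\rm H}_{ij,k}\le0$, which equals each authority.
\end{itemize}

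The main obstacle is mild. The sign cases for $\beta^{\rm H}_{ij,k}$ must be handled carefully, since a negative demand combined with an $\alpha$ that is not proportional could otherwise look problematic. We also need nonnegativity of the authorities, which comes from $0\in\cU_i$. Finally, we should note that feasibility of each local row is equivalent to $\mu_i(k)\ge0$ when that row is the only one, which links the result to Proposition~\ref{prop:wellposed}.
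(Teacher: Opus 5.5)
Your proof is correct and matches the paper's argument: the maximum of the coupled row over the product of input sets separates into $c_{ij}^i+c_{ij}^j$, and the proportional split gives $\alpha_{ij}\beta^{\rm H}_{ij,k}\le c_{ij}^i$ (and symmetrically for $j$), so the authority maximizers satisfy both local rows. The only difference is case bookkeeping: the paper handles $\beta^{\rm H}_{ij,k}\le0$ with $u_i=u_j=0$, which also covers your separate zero-authority case, whereas you use the maximizers in every case.
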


\begin{proof}
Because the input sets are independent and $n_{ji,k}=-n_{ij,k}$,
\begin{align*}
 \max_{u_i\in\cU_i,u_j\in\cU_j}
 n_{ij,k}^\top(u_i-u_j)
 &=\max_{u_i\in\cU_i}n_{ij,k}^\top u_i
   +\max_{u_j\in\cU_j}n_{ji,k}^\top u_j\\
 &=c_{ij}^i+c_{ij}^j.
\end{align*}
The coupled row is therefore feasible exactly when its right-hand side does
not exceed this maximum. Since $0\in\cU_i\cap\cU_j$, both authorities are
nonnegative. If $\beta^{\rm H}_{ij,k}\le0$, $u_i=u_j=0$ satisfies both local
rows. If $\beta^{\rm H}_{ij,k}>0$, feasibility of the coupled row implies
\begin{equation*}
 \alpha_{ij}\beta^{\rm H}_{ij,k}
 \le\frac{c_{ij}^i}{c_{ij}^i+c_{ij}^j}(c_{ij}^i+c_{ij}^j)=c_{ij}^i,
\end{equation*}
and the analogous inequality holds for robot $j$. Compactness provides
maximizers in~\eqref{eq:authority}; those maximizers satisfy the two local
rows.
\end{proof}

\begin{theorem}
\label{thm:hold}
Under Assumption~\ref{ass:online}, suppose $h_{ij,k}\ge0$,
$\phi_T(h_{ij,k},\nu_{ij,k})$ is finite, and the applied commands satisfy the
two corresponding hard rows in~\eqref{eq:localqp}.
Then
\begin{equation}
 \|p_i(t_k+\tau)-p_j(t_k+\tau)\|\ge d_{\rm col}
 \quad\forall\tau\in[0,T].
 \label{eq:pair-hold-result}
\end{equation}
If $h_{io,k}\ge0$, $\phi_T(h_{io,k},\nu_{io,k})$ is finite, and the applied
command satisfies the analogous obstacle hard row, then
$\operatorname{dist}(p_i(t_k+\tau),\mathcal P_o)\ge r_{\rm rob}$ on the same
interval.
\end{theorem}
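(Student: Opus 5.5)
The proof reduces the Euclidean separation to a scalar projected clearance along the frozen snapshot normal and then invokes Lemma~\ref{lem:hold-demand}. Fix the pair $(i,j)$ and the hold $[t_k,t_{k+1})$, and write $n=n_{ij,k}$. Define
\[
 g(\tau)=n^\top\bigl(p_i(t_k+\tau)-p_j(t_k+\tau)\bigr)-d_{\rm col},
 \qquad \tau\in[0,T].
\]
Because $n$ is a unit vector, Cauchy--Schwarz gives $\|p_i-p_j\|\ge n^\top(p_i-p_j)$. It therefore suffices to show $g(\tau)\ge0$ on $[0,T]$. The value at $\tau=T$ is covered by continuity of position at $t_{k+1}$.

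Under zero-order hold, \eqref{eq:dynamics} integrates exactly:
\[
 g(\tau)=g(0)+\tau\,n^\top(v_{i,k}-v_{j,k})
 +\tfrac12\tau^2 n^\top(u_{i,k}-u_{j,k})
 +\int_0^\tau\!\!\int_0^s n^\top\bigl(d_i-d_j\bigr)(t_k+r)\,dr\,ds.
\]
I will bound each term from below.
\begin{itemize}
\item The remark after \eqref{eq:pair-clearance}, which relies on Assumption~\ref{ass:online}, gives $g(0)\ge h_{ij,k}$ and $n^\top(v_{i,k}-v_{j,k})\ge\nu_{ij,k}$. Since $\tau\ge0$, the second bound can be multiplied by $\tau$.
\item The directional bound \eqref{eq:directional-bounds} holds pointwise on the hold, so the double integral is at least $-\tfrac12\tau^2\bar d_{ij}$.
\item Lemma~\ref{lem:composition}, using the symmetric sensing in Assumption~\ref{ass:online}, gives $n^\top(u_{i,k}-u_{j,k})\ge\beta^{\rm H}_{ij,k}=\phi_T(h_{ij,k},\nu_{ij,k})+\bar d_{ij}$.
\end{itemize}
Combining these, the $\bar d_{ij}$ terms cancel and
\[
 g(\tau)\ge h_{ij,k}+\tau\nu_{ij,k}+\tfrac12\tau^2\phi_T(h_{ij,k},\nu_{ij,k}).
\]
Now apply the direction ``$\Leftarrow$'' of \eqref{eq:hold-demand-equivalence} with $a=\phi_T$, which is allowed because $h_{ij,k}\ge0$ and $\phi_T$ is finite. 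This shows the right-hand side is nonnegative for all $\tau\in[0,T]$, which yields \eqref{eq:pair-hold-result}.

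The obstacle case follows the same steps with a single robot. Set
\[
 g_o(\tau)=n_{io,k}^\top\bigl(p_i(t_k+\tau)-q_{io,k}\bigr)-r_{\rm rob}.
\]
Because $(q_{io,k},n_{io,k})$ is a supporting plane of the convex set $\mathcal P_o$ with outward unit normal, every $y\in\mathcal P_o$ satisfies $n_{io,k}^\top(y-q_{io,k})\le0$. Hence, for any $p$ on the outer side, $\operatorname{dist}(p,\mathcal P_o)\ge n_{io,k}^\top(p-q_{io,k})$, since $\|p-y\|\ge n^\top(p-y)\ge n^\top(p-q)$. The snapshot bounds give $g_o(0)\ge h_{io,k}$ and $n_{io,k}^\top v_{i,k}\ge\nu_{io,k}$. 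The hard obstacle row is applied directly, with no split needed, and \eqref{eq:directional-bounds} handles the drift. Lemma~\ref{lem:hold-demand} then finishes the argument as before.

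The main point needing care is that the normal is frozen at the snapshot direction for the whole hold rather than rotating with the true relative position. This is legitimate precisely because the projection inequality $\|x\|\ge n^\top x$ holds for any fixed unit $n$. A second point of care is that the drift term is integrated using the pointwise directional bound rather than a norm bound. The rest is bookkeeping.
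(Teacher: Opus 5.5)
Your proposal is correct and follows the paper's proof essentially step for step. You project onto the frozen snapshot normal, integrate the held dynamics, bound the drift by the directional residual, invoke Lemma~\ref{lem:composition} so the $\bar d_{ij}$ allowance cancels, close with Lemma~\ref{lem:hold-demand}, and treat the obstacle case through the supporting-plane bound $\operatorname{dist}(p,\mathcal P_o)\ge n^\top(p-q)$; the qualifier ``on the outer side'' there is unnecessary, since that chain of inequalities holds for every $p$.
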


\begin{proof}
For the pair result, write $r=p_i-p_j$, $w=v_i-v_j$,
$d_{ij}=d_i-d_j$, and freeze $n=n_{ij,k}$ over the hold. By definition,
$n^\top r(t_k)-d_{\rm col}\ge h_{ij,k}$ and
$n^\top w(t_k)\ge\nu_{ij,k}$. Two integrations of~\eqref{eq:dynamics} give
\begin{align*}
 n^\top r(t_k+\tau)-d_{\rm col}
 \ge{}&h_{ij,k}+\tau\nu_{ij,k}
 +\tfrac12\tau^2n^\top(u_{i,k}-u_{j,k})\\
 &+\int_0^\tau(\tau-s)n^\top d_{ij}(t_k+s)\,ds.
\end{align*}
The directional residual bound implies
\begin{equation*}
 \int_0^\tau(\tau-s)n^\top d_{ij}(t_k+s)\,ds
 \ge-\tfrac12\tau^2\bar d_{ij}.
\end{equation*}
The two applied hard rows and Lemma~\ref{lem:composition} give
\begin{equation*}
 n^\top(u_{i,k}-u_{j,k})
 \ge\phi_T(h_{ij,k},\nu_{ij,k})+\bar d_{ij}.
\end{equation*}
Substitution cancels the residual allowance. Since $n$ is a unit vector,
$\|r\|\ge n^\top r$, and hence
\begin{align*}
\|r(t_k+\tau)\|-d_{\rm col}
&\ge n^\top r(t_k+\tau)-d_{\rm col}\\
 &\ge h_{ij,k}+\tau\nu_{ij,k}
 +\tfrac12\tau^2\phi_T(h_{ij,k},\nu_{ij,k})\\
 &\ge0.
\end{align*}
The final inequality is Lemma~\ref{lem:hold-demand}; it holds at every point
of the hold, not only at the next sample.

For the obstacle result, set $n=n_{io,k}$ and $q=q_{io,k}$. The supporting
property gives
$\mathcal P_o\subseteq\{y:n^\top(y-q)\le0\}$. Thus, for every
$y\in\mathcal P_o$,
\begin{equation*}
 \|p-y\|\ge n^\top(p-y)\ge n^\top(p-q),
\end{equation*}
and taking the infimum over $y$ yields
$\operatorname{dist}(p,\mathcal P_o)\ge n^\top(p-q)$. Integrating the
single-robot dynamics gives
\begin{align*}
 n^\top(p_i(t_k+\tau)-q)-r_{\rm rob}
 \ge{}&h_{io,k}+\tau\nu_{io,k}\\
 &+\tfrac12\tau^2
 \{n^\top u_{i,k}-\bar d_{io}\}.
\end{align*}
The obstacle hard row makes the braced term at least
$\phi_T(h_{io,k},\nu_{io,k})$. Lemma~\ref{lem:hold-demand} completes the
argument.
\end{proof}

\begin{corollary}
\label{cor:fleet}
Under Assumption~\ref{ass:online}, suppose the fleet is initially safe, every
active tightened clearance is nonnegative, every active finite-hold demand is
finite, and obstacle rows activate whenever
$\operatorname{dist}(\hat p_{i,k},\mathcal P_o)\le R_o$ using a closest
supporting plane. Suppose also that Algorithm~\ref{alg:online} applies an optimizer of
\eqref{eq:localqp} at every sample, and $\mu_i(k)\ge0$ for every active row set.
If
\begin{equation}
\begin{aligned}
 R_{\rm sens}-3(\varepsilon_i^p+\varepsilon_j^p)
 -T(\bar v_i+\bar v_j)&>d_{\rm col},\\
 R_o-3\varepsilon_i^p-T\bar v_i&>r_{\rm rob},
\end{aligned}
 \label{eq:activation-condition}
\end{equation}
where $R_o$ is the obstacle-activation distance, then the switching sensing
graph preserves~\eqref{eq:safe-set} for all continuous time. This conclusion is
independent of packet delivery.
\end{corollary}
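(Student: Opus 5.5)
The plan is induction over sampling intervals, with Theorem~\ref{thm:hold} certifying each active row over its hold and the activation condition~\eqref{eq:activation-condition} covering every inactive pair or obstacle.

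\emph{Induction hypothesis.} The state at $t_k$ satisfies~\eqref{eq:safe-set}. We show that~\eqref{eq:safe-set} then holds on all of $[t_k,t_{k+1}]$. Position is continuous at sampling instants, and a speed limiter changes only the velocity, so the hypothesis passes to $t_{k+1}$. Induction from the initially safe fleet then gives safety for all continuous time.

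\emph{Active pairs and faces.} Since $\mu_i(k)\ge0$ for every robot, Proposition~\ref{prop:wellposed} makes each local QP feasible with a unique optimizer. The applied optimizer satisfies every hard row of~\eqref{eq:localqp}. Sensing is synchronized and symmetric: $\|\hat p_{i,k}-\hat p_{j,k}\|$ is one shared relative snapshot, so $j\in\cN_i^s(k)$ iff $i\in\cN_j^s(k)$. Hence both members carry the pair row with complementary weights. By hypothesis the tightened clearance is nonnegative and the demand is finite. Theorem~\ref{thm:hold}, via Lemma~\ref{lem:composition}, then gives separation on $[t_k,t_{k+1}]$. The same theorem handles active obstacle faces.

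\emph{Inactive pairs.} Suppose $\|\hat p_{i,k}-\hat p_{j,k}\|>R_{\rm sens}$. The snapshot bounds give
\[
\|p_{i,k}-p_{j,k}\|>R_{\rm sens}-\varepsilon_i^p-\varepsilon_j^p .
\]
The speed bound $\|v_i(t)\|\le\bar v_i$ gives $\|p_i(t)-p_{i,k}\|\le T\bar v_i$ on the hold, and likewise for $j$. Therefore
\[
\|p_i(t)-p_j(t)\|>R_{\rm sens}-(\varepsilon_i^p+\varepsilon_j^p)-T(\bar v_i+\bar v_j)>d_{\rm col}.
\]
The condition's coefficient $3$ exceeds the $1$ needed here, so the bound holds with room to spare. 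I would remark that the extra $2\varepsilon$ presumably budgets for asynchronous or one-sided activation, but it is not needed under symmetric snapshots.

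\emph{Inactive obstacles.} If $\operatorname{dist}(\hat p_{i,k},\mathcal P_o)>R_o$, then distance to a set is 1-Lipschitz, so $\operatorname{dist}(p_{i,k},\mathcal P_o)>R_o-\varepsilon_i^p$. Along the hold the distance drops by at most $T\bar v_i$, and the second inequality of~\eqref{eq:activation-condition} gives $\operatorname{dist}(p_i(t),\mathcal P_o)>r_{\rm rob}$.

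\emph{Main obstacle.} The delicate point is justifying that every pair is either inactive or has an active row satisfying Theorem~\ref{thm:hold}'s hypotheses. This rests on the stated assumptions that active tightened clearances are nonnegative and finite. For obstacles there is one more subtlety: the closest supporting plane must be the one used, so that its normal and $h_{io,k}$ match the theorem. I would first try to argue this directly from the closest-point construction.

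Packet independence follows from Proposition~\ref{prop:wellposed}: packets change only $\ubar_i$, while every step above uses only the hard feasible set.
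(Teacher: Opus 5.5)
Your proof is correct and has the same skeleton as the paper's. Both argue by induction over holds, use Proposition~\ref{prop:wellposed} with Theorem~\ref{thm:hold} (via Lemma~\ref{lem:composition}) for active rows, and use the reverse triangle inequality with the speed bound, or the one-Lipschitz distance, for inactive ones. You also make explicit that a shared relative snapshot makes activation symmetric, so both members carry the pair row Theorem~\ref{thm:hold} needs; the paper leaves this implicit.

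Where you go astray is the coefficient $3$: it does not budget for asynchronous activation. The paper uses it for one further step at $t_{k+1}$. A pair inactive at $t_k$ satisfies $\|\hat p_{i,k+1}-\hat p_{j,k+1}\|>R_{\rm sens}-2(\varepsilon_i^p+\varepsilon_j^p)-T(\bar v_i+\bar v_j)$. One $\varepsilon$ passes from snapshot to truth at $t_k$, and one returns at $t_{k+1}$. The third $\varepsilon$ is the tightening inside $h_{ij,k+1}$, so \eqref{eq:activation-condition} gives $h_{ij,k+1}>0$ for every newly activated pair.

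Similarly, the closest-supporting-plane hypothesis is not a subtlety for Theorem~\ref{thm:hold}, which needs only the supporting half-space. Its role is to make $n_{io,k+1}^\top(\hat p_{i,k+1}-q_{io,k+1})=\operatorname{dist}(\hat p_{i,k+1},\mathcal P_o)$. The same chain then yields $h_{io,k+1}>0$ for a newly activated face.

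Because the corollary assumes every active tightened clearance is nonnegative, your argument does not need this step and is complete as written. What the paper's extra step buys is that this assumption holds automatically at the moment of activation. It is therefore a genuine hypothesis only for rows that stay active, and this is why \eqref{eq:activation-condition} carries $3\varepsilon$ rather than $\varepsilon$.
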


\begin{proof}
Assume~\eqref{eq:safe-set} holds at $t_k$. For every active row,
Proposition~\ref{prop:wellposed} and $\mu_i(k)\ge0$ imply that the optimizer of
\eqref{eq:localqp} satisfies the hard constraint. Theorem~\ref{thm:hold} then
preserves the corresponding pair or obstacle separation throughout the hold.

If pair $(i,j)$ is inactive at $t_k$, then
$\|\hat p_i(t_k)-\hat p_j(t_k)\|>R_{\rm sens}$. The reverse triangle
inequality and the speed bounds give, for every $\tau\in[0,T]$,
\begin{align*}
 \|p_i(t_k+\tau)-p_j(t_k+\tau)\|
 &>R_{\rm sens}-\varepsilon_i^p-\varepsilon_j^p\\
 &\quad-\int_{t_k}^{t_k+\tau}\|v_i(s)-v_j(s)\|\,ds\\
 &\ge R_{\rm sens}-\varepsilon_i^p-\varepsilon_j^p\\
 &\quad-\tau(\bar v_i+\bar v_j)>d_{\rm col}.
\end{align*}
For an inactive obstacle, the distance function is one-Lipschitz, so
\begin{equation*}
 \operatorname{dist}(p_i(t_k+\tau),\mathcal P_o)
 >R_o-\varepsilon_i^p-\tau\bar v_i>r_{\rm rob}.
\end{equation*}
At $t_{k+1}$, a second application of the measurement-error bound gives
$\|\hat p_i-\hat p_j\|>R_{\rm sens}-2(\varepsilon_i^p+
\varepsilon_j^p)-T(\bar v_i+\bar v_j)$. Hence a newly active pair has
$h_{ij,k+1}>0$ by~\eqref{eq:activation-condition}; the one-Lipschitz obstacle
distance gives the analogous conclusion for $h_{io,k+1}$. Thus active and
inactive interactions remain safe, and newly active rows enter the next
induction step with positive tightened clearance. Packet delivery changes
neither the hard rows nor this induction.
\end{proof}

The theorem addresses intersample collision avoidance for the stated plant and
bounds. It does not turn a locally feasible safety projection into a route
planner: deadlock avoidance and goal convergence remain properties of the
nominal planning layer. The claim is conditional on the stated residual and
sensing bounds and on \(\mu_i(k)\ge0\) for every active hard-row set. If that
test fails, Algorithm~\ref{alg:online} reports infeasibility and makes no
collision-avoidance claim for the ensuing hold interval.

\FloatBarrier
\section{Experiments}
\label{sec:experiments}

\subsection{Protocol}

The main benchmark in Fig.~\ref{fig:scenario} places eight robots on opposing
warehouse lanes. Their shortest routes cross between the shelves, creating pair
and shelf conflicts at the same time. We train the Koopman model once from 220
rollouts of 22 steps using seed 7 and ridge $10^{-7}$, then freeze it.
Setting $\kappa=0.55~\mathrm{m}^{-1}$ gives the 19-dimensional lift.
Each training position coordinate lies in $[-5.5,5.5]~\mathrm{m}$;
velocities and smoothed random controls are clipped to their admissible sets.
Within each setting, all communication-dependent controllers receive the same
Bernoulli packet-loss stream. A trial seed changes that stream, not the fixed
warehouse geometry. ORCA and the reactive QP do not use these messages and are
therefore deterministic in this test. No controller uses a global route planner,
an intersection waypoint, or a resource scheduler.

\begin{figure}[t]
    \centering
    \includegraphics[width=0.96\columnwidth]{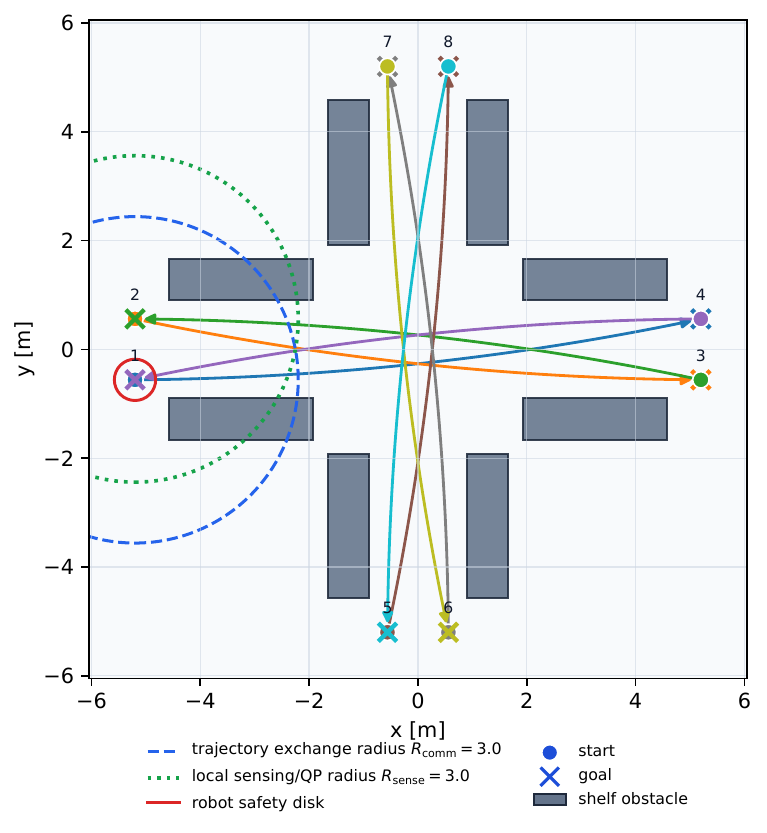}
    \caption{Eight-robot warehouse benchmark. Circles and crosses are starts
    and goals; gray rectangles are shelves. Dashed and dotted neighborhoods
    illustrate trajectory communication and local safety sensing.}
    \label{fig:scenario}
\end{figure}

We compare four controllers: (1) an ORCA half-plane linear program (LP) with
shelf constraints; (2) goal feedback followed by the same local finite-hold
projection; (3) predictive Koopman-MPC with message-dependent pair constraints
but no final projection; and (4) the full controller. A robot collision step is
a fleet time step containing any pair below
$d_{\rm col}=0.45~\mathrm{m}$. A shelf collision step contains any robot center
closer than $r_{\rm rob}=0.22~\mathrm{m}$ to a shelf. A robot run succeeds when
its final goal error is at most $0.75~\mathrm{m}$; success columns report the
fraction of successful robot runs. The sequential cycle time
includes all eight planning calls, safety QPs, and Python orchestration; a
$120~\mathrm{ms}$ control period is available, and p95 denotes the 95th
percentile. We use $d_{\min}$ for minimum pair distance and ``Shelf'' for
minimum signed center-to-shelf clearance.

\begin{table}[t]
\centering
\caption{Main simulation parameters.}
\label{tab:parameters}
\footnotesize
\begin{tabular}{@{}ll@{}}
\toprule
Parameter & Value \\
\midrule
$N$, $T$, steps & 8, $0.12~\mathrm{s}$, 240 \\
MPC horizon, $d_{\rm plan}$ & 8, $0.76~\mathrm{m}$ \\
$R_{\rm comm}$, $R_{\rm sens}$ & $3.0~\mathrm{m}$, $3.0~\mathrm{m}$ \\
Main dropout & 0.05, 0.15 \\
$d_{\rm col}$, $r_{\rm rob}$ & $0.45~\mathrm{m}$, $0.22~\mathrm{m}$ \\
$k_v$, $k_p$ & 5, 6 \\
$\bar d_{ij}$, $\bar d_{io}$ & $1.54$, $0.77~\mathrm{m/s^2}$ \\
$u_{\max}$, $v_{\max}$ & $1.8~\mathrm{m/s^2}$, $2.1~\mathrm{m/s}$ \\
\bottomrule
\end{tabular}
\end{table}

\begin{table*}[!t]
\centering
\caption{Fixed warehouse over dropout $\{0.05,0.15\}$ and ten matched streams
per level. Collision columns are mean fleet time steps per trial.}
\label{tab:main}
\footnotesize
\begin{tabular}{@{}lccccccc@{}}
\toprule
Method & Safe & Robot & Shelf & Goals & Error & $d_{\min}$ & Shelf \\
 & /20 & coll. & coll. & /160 & $[\mathrm{m}]$ & $[\mathrm{m}]$ & $[\mathrm{m}]$ \\
\midrule
ORCA & 0 & 12.0 & 63.0 & 160 & 0.631 & 0.288 & 0.125 \\
Goal + filter & 20 & 0.0 & 0.0 & 120 & 2.031 & 0.502 & 0.224 \\
Predictive MPC & 1 & 1.3 & 6.2 & 160 & 0.0065 & 0.264 & $-0.379$ \\
\textbf{Ours} & \textbf{20} & \textbf{0.0} & \textbf{0.0} & \textbf{160} & \textbf{0.0065} & \textbf{0.676} & \textbf{0.321} \\
\bottomrule
\end{tabular}
\end{table*}

\begin{figure*}[!t]
    \centering
    \includegraphics[width=0.78\textwidth]{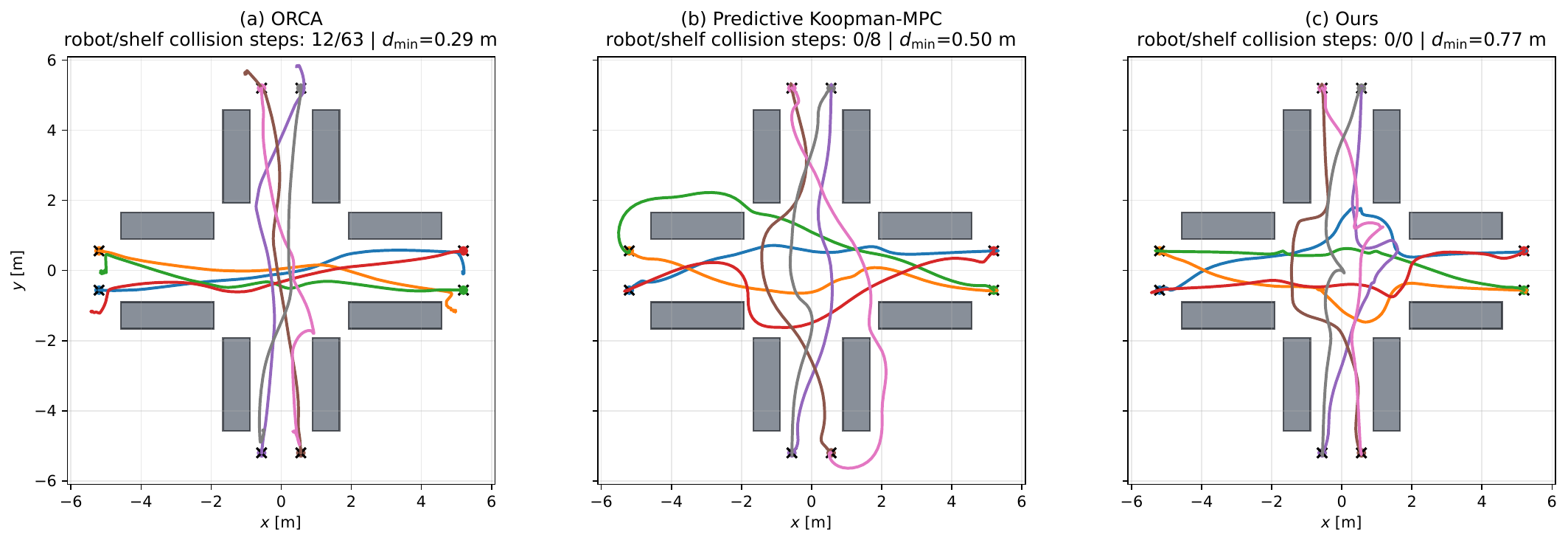}
    \caption{Representative matched trial at 15\% dropout (packet stream
    seed 3409652536). Lines are executed trajectories, colored dots are starts,
    black crosses are goals, and rectangles are shelves; Table~\ref{tab:main}
    reports all 20 trials.}
    \label{fig:trajectories}
\end{figure*}

\begin{figure*}[!t]
    \centering
    \includegraphics[width=0.96\textwidth]{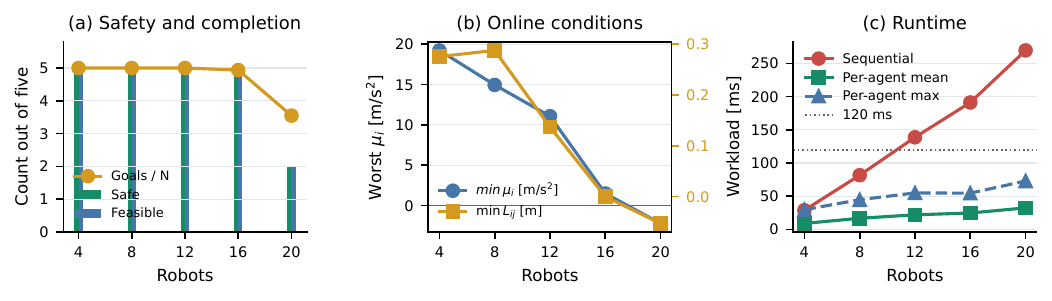}
    \caption{Fixed-footprint fleet scaling at 15\% dropout, with five packet
    streams per size. (a) Bars count safe and feasible trials; the line is
    completed robots divided by fleet size. (b) Worst online feasibility and
    finite-hold margins. (c) Central sequential workload and the reconstructed
    per-agent critical path; the dotted line is the $120~\mathrm{ms}$ period.}
    \label{fig:scaling}
\end{figure*}

The simulated residual is the sum of quadratic drag, bounded terrain terms,
and velocity cross-coupling. With $v_{\max}=2.1~\mathrm{m/s}$, each robot
satisfies
\begin{equation}
\begin{aligned}
 \|d_i\|\le{}&0.10v_{\max}^2+\sqrt{0.19^2+0.17^2}\\
 &+0.035v_{\max}<0.77~\mathrm{m/s^2}.
\end{aligned}
 \label{eq:simulator-bound}
\end{equation}
Proposition~\ref{prop:residual-support} therefore gives the obstacle bound
$\bar d_{io}=0.77~\mathrm{m/s^2}$ and pair bound
$\bar d_{ij}=2(0.77)=1.54~\mathrm{m/s^2}$. These values are fixed before any
trial. Moreover,
$R_{\rm sens}-2Tv_{\max}=2.496~\mathrm{m}>d_{\rm col}$; the
$1.8~\mathrm{m}$ obstacle trigger similarly satisfies
$1.8-Tv_{\max}>r_{\rm rob}$. Thus the reported configuration meets
\eqref{eq:activation-condition} by construction.

\subsection{Prediction Ablation}

Across three model seeds and 100 test rollouts each, 25-step errors are
$0.471\pm0.003$, $0.280\pm0.010$, $0.381\pm0.012$, and
$0.265\pm0.010~\mathrm{m}$ for the linear, trigonometric, quadratic, and full
lifts. The full lift is 44\% better than the linear one and uses no shelf
coordinates, so layout changes require no dynamics retraining within the sampled
envelope.

\subsection{Main Warehouse Result}

ORCA and predictive Koopman-MPC reach 160/160 goals but are safe in 0/20 and
1/20 trials. Goal feedback plus projection is safe in 20/20 but reaches 120/160
goals. Ours is safe in 20/20, reaches 160/160, and retains the predictive
controller's $0.0065~\mathrm{m}$ error; its worst pair and shelf distances are
$0.676$ and $0.321~\mathrm{m}$.

\subsection{Hard-Row Audit}

For each active pair we log the minimum quadratic lower bound used in the proof,
\begin{equation}
\begin{aligned}
 L_{ij,k}=\min_{0\le\tau\le T}\big[&h_{ij,k}+\tau\nu_{ij,k}\\
 &+\tfrac12\tau^2\{n_{ij,k}^\top(u_{i,k}-u_{j,k})-\bar d_{ij}\}\big],
\end{aligned}
 \label{eq:logged-lower-bound}
\end{equation}
and define $L_{io,k}$ analogously. Across 38,400 local solves, every LP and hard
QP solves, $\min\mu_i=14.79~\mathrm{m/s^2}$, and the smallest pair/shelf bounds
are $0.214/0.094~\mathrm{m}$. Anticipatory relaxation reaches
$1.523~\mathrm{m/s^2}$ but never enters a safety row.

\subsection{Boundary and Scaling Studies}

\begin{table}[t]
\centering
\caption{Frozen-model stress tests ($\mu_i$ in $\mathrm{m/s^2}$, $L_{ij}$ in
m); negative $\mu_i$ removes the theorem's premise.}
\label{tab:stress}
\scriptsize
\setlength{\tabcolsep}{3.1pt}
\renewcommand{\arraystretch}{0.93}
\begin{tabular}{@{}lrrrr@{}}
\toprule
Study & Safe & Goals & $\min\mu_i$ & $\min L_{ij}$\\
\midrule
Dropout 0.50 & 10/10 & 80/80 & 4.130 & 0.036\\
$R_{\rm sens}=3.0$ m & 10/10 & 80/80 & 11.760 & 0.148\\
$\varepsilon^p\le2$ cm & 15/15 & 120/120 & 7.214 & 0.087\\
Perturbed shelves & 19/20 & 160/160 & $-0.756$ & $-0.030$\\
Scaling, $N\le16$ & 20/20 & 199/200 & 1.496 & $\approx0$\\
Diff.\ drive, $N=4$ & 20/20 & 80/80 & 3.388 & 0.028\\
$\varepsilon^p=5$ cm & 4/5 & 38/40 & $-7.245$ & $-0.250$\\
Scaling, $N=20$ & 2/5 & 71/100 & $-2.213$ & $-0.053$\\
\bottomrule
\end{tabular}
\end{table}

All rows reuse model seed 7, and~\eqref{eq:activation-condition} holds at every
listed sensing-error level. Disk-bounded snapshot errors use
$\varepsilon_i^v=2\varepsilon_i^p~\mathrm{s^{-1}}$ and
\eqref{eq:pair-clearance}. At 3 cm, 5/5 runs remain collision-free but
$\min\mu_i=-0.979~\mathrm{m/s^2}$ with four hard-row faults.

Figure~\ref{fig:scaling} uses distinct ports, opposite goals, and static shelf
routes. The near-zero $N=16$ audit is solver tolerance. At $N=20$, negative
margin precedes nine collision samples, marking a fixed-footprint authority
boundary rather than a fleet-size guarantee.

The differential-drive stress preserves longitudinal acceleration and clips
lateral acceleration at $\|v_i\|\omega_{\max}$. Write the resulting tracking
mismatch as $d_i^{\rm dd}$. For $\|v_i\|>0$, let $e_i=v_i/\|v_i\|$ and let
$e_i^\perp$ be its lateral unit vector; then~\eqref{eq:support-bounds} uses
$-n^\top d_i^{\rm dd}\le|n^\top e_i^\perp|
[u_{\max,i}-\|v_i\|\omega_{\max}]_+$. At rest we use $u_{\max,i}$. We set
$v_{\max}=1.5~\mathrm{m/s}$ and $\omega_{\max}=2.4~\mathrm{rad/s}$. In the
separate heterogeneous-input study, authority allocation raises the worst pair
distance from 0.696 to $0.802~\mathrm{m}$ and cuts maximum anticipation
relaxation from 0.683 to $0.039~\mathrm{m/s^2}$.

\subsection{Solver Diagnostics}

All 38,400 MPC and 38,400 safety-QP solves return, with mean/p95/max iterations
of $103.6/304.1/2500$ and $86.9/131.4/9025$. The sequential simulator sums
eight robots and takes $90.1/109.0/159.4~\mathrm{ms}$. A per-agent
reconstruction including full row-build time takes $18.1/25.5/46.9~\mathrm{ms}$;
the scaling maximum is $73.0~\mathrm{ms}$ and no sample exceeds $120~\mathrm{ms}$.
It excludes scheduling, communication I/O, and hardware worst-case timing.

\section{Conclusion}

Local sensing wraps nominal Koopman-MPC in a hard finite-hold projection with
bounded-snapshot tightening. Under explicit plant, sensing, and feasibility
conditions, fixed-warehouse safety rises from 1/20 to 20/20 at the same
$0.0065~\mathrm{m}$ goal error. Across stress tests, unsafe cases follow a
negative online margin.

\begingroup
\renewcommand{\footnotesize}{\fontsize{7.3pt}{8.2pt}\selectfont}

\endgroup

\end{document}